\theoremstyle{plain}
\newtheorem{theorem}{Theorem}[section]
\newtheorem{lemma}[theorem]{Lemma}
\theoremstyle{definition}
\newtheorem{definition}[theorem]{Definition}
\newtheorem{assumption}[theorem]{Assumption}
\theoremstyle{remark}
\newtheorem{remark}[theorem]{Remark}
\newcommand{\E}{\mathbb E}
\newcommand{\N}{\mathbb N}
\renewcommand{\P}{\mathcal P}
\newcommand{\R}{\mathbb R}
\newcommand{\1}{\mathds 1}
\newcommand{\Acal}{\mathcal{A}}
\newcommand{\Bcal}{\mathcal{B}}
\newcommand{\Ecal}{\mathcal{E}}
\newcommand{\Ocal}{\mathcal{O}}
\newcommand{\Pcal}{\mathcal{P}}
\newcommand{\Xcal}{\mathcal{X}}
\newcommand{\mypara}[1]{{\smallskip \noindent \bf #1}\hspace{0.1in}}
\title{Federated $\mathcal{X}$-armed Bandit with Flexible Personalisation}
\author{ 
    Ali ~Arabzadeh \\
    School of Mathematical Sciences\\
    Lancaster University\\
    Lancaster, LA1 4YF, UK\\
    \texttt{s.arabzadeh@lancaster.ac.uk}\\
    \And
    James ~A. ~Grant \\
    School of Mathematical Sciences\\
    Lancaster University\\
    Lancaster, LA1 4YF, UK\\
    \texttt{j.grant@lancaster.ac.uk}\\
    \And
    David ~S. ~Leslie  \\
    School of Mathematical Sciences\\
    Lancaster University\\
    Lancaster, LA1 4YF, UK\\
    \texttt{d.leslie@lancaster.ac.uk}\\
}
\begin{document}
\maketitle
\begin{abstract}
This paper introduces a novel approach to personalised federated learning within the $\Xcal$-armed bandit framework, addressing the challenge of optimising both local and global objectives in a highly heterogeneous environment. Our method employs a surrogate objective function that combines individual client preferences with aggregated global knowledge, allowing for a flexible trade-off between personalisation and collective learning. We propose a phase-based elimination algorithm that achieves sublinear regret with logarithmic communication overhead, making it well-suited for federated settings. Theoretical analysis and empirical evaluations demonstrate the effectiveness of our approach compared to existing methods. Potential applications of this work span various domains, including healthcare, smart home devices, and e-commerce, where balancing personalisation with global insights is crucial.

\end{abstract}

\section{Introduction}

In the rapidly evolving field of machine learning, realising appropriate levels of personalisation has emerged as a critical challenge. This article addresses this pressing concern within the context of sequential decision-making, specifically focusing on the $\mathcal{X}$-armed bandit \citep{bubeck2011x}. The need for both data aggregation and individual customisation is evident in various domains. For instance, in the context of healthcare, data can be aggregated across subjects to yield population-level insights into (e.g., dosage efficacy), but individual differences necessitate personalised treatment. Similarly, for smart home devices, data aggregation across the clients can result in improved general policies, but each individual home has different characteristics; privacy should be guaranteed to encourage the beneficial aggregation of data for the common good, but personalised policies need to be implemented at individual devices. These scenarios highlight two key concepts: “federated learning” \citep{konevcny2016federated, mcmahan2017communication}, which enables data aggregation while maintaining privacy, and “personalisation”\citep{smith2017federated}, which tailors strategies to individual needs. However, the optimal level of personalisation varies across different applications and use-cases; therefore, a one-size-fits-all approach is insufficient. This article develops a personalised federated learning framework for the $\mathcal{X}$-armed bandit, designed to allow flexible trade-offs between global and individual objectives, enhancing user experience by sharing exploration of the decision space, aggregating data securely, and facilitating individually beneficial decisions.

Federated learning (FL) \citep{konevcny2016federated, mcmahan2017communication} is a distributed learning paradigm that addresses efficiency and privacy concerns in real-world, unbalanced and non-IID datasets. As datasets grow and models become more complex, the need to distribute the learning process across multiple machines has increased. Conventional distributed learning approaches \citep{ma2017distributed, reddi2016aide, zhang2015disco, shamir2014communication} were designed for well-regulated environments like data centers, where data is typically balanced and i.i.d. across machines. However, these methods are ill suited for privacy-sensitive, heterogeneous data found on edge devices such as smartphones.

FL emerged as a solution to this challenge, enabling the training of models on rich and privacy-sensitive data from edge clients. In the FL framework, edge clients collaboratively train a shared global model under the coordination of a central server. This approach preserves the privacy of local data while reducing communication costs. By leveraging the collective knowledge of edge clients, FL offers a powerful new paradigm for distributed learning in real-world settings.

While vanilla FL has shown promise in privacy-preserving distributed learning, it can lead to suboptimal performance for individual edge clients with heterogeneous local datasets. ‌To address this limitation, FL with personalisation has been proposed as an extension of  the vanilla FL framework. This approach incorporates client-specific objectives to enhance model performance on heterogeneous data distributions\citep{smith2017federated, wu2020personalized}.

In personalised FL, the globally trained model serves as a foundation that is further adapted or fine-tuned to better align with individual clients' specific data or preferences. This methodology enables the development of models that are both privacy-preserving and highly tailored to individual clients, effectively balancing collective knowledge sharing with local specialisation.

\subsection{FL and Multi-Armed Bandits}
While state-of-the-art FL research has predominantly focused on supervised learning scenarios, there is a growing interest in extending FL to the multi-armed bandit (MAB) framework \citep{lai1985asymptotically,auer2002finite}. The MAB problem is a popular framework for studying decision-making under uncertainty. In its simplest form, the MAB consists of a set of independent “arms”, each providing a random reward from its corresponding probability distribution. An agent, without prior knowledge of these distributions, selects one arm in each round, aiming to maximise the total expected reward over time. This process necessitates a delicate balance between exploring arms to learn the unknown distributions and exploiting the current knowledge by choosing the arm that has historically provided the highest reward. The extension of FL to MAB is particularly relevant for applications like recommender systems and clinical trials, where decision-making is inherently distributed across multiple clients. 

Unlike centralised MAB models that assume instant data access, the FL approach processes data locally at each client, thereby reducing communication overhead and enhancing data privacy. However, federated bandits introduce unique challenges beyond the classical exploration-exploitation trade-off in \emph{centralised} models. These challenges include data heterogeneity and privacy preservation\citep{shi2021federated-a, shi2021federated-b, huang2021federated, li2022federated, reda2022near}. In FL scenarios, collaboration among clients becomes crucial for accurate inference on the global model, especially given that each client's data may not be independent and identically distributed. Yet privacy concerns and communication costs motivate clients to avoid direct transmissions of local data.



Balancing exploration and exploitation while achieving privacy and personalisation requires a complex yet efficient communication protocol between clients and a central server. While \citet{shi2021federated-b}  have addressed this problem for finite,  un-structured action spaces, our work extends the principles of personalised federated learning to the $\Xcal$-armed bandit (XAB) problem, addressing the aforementioned challenges in this richer, continuous action space framework. 

The $\mathcal{X}$-armed bandit (XAB) problem is an extension of the classic MAB problem, where the decision-space is significantly larger and even continuous, involving a potentially infinite number of arms. Unlike traditional MAB problems which are limited to a discrete set of options, the XAB framework allows for a broader range of actions, making it particularly suited for complex environments where actions cannot be easily enumerated. This complexity introduces unique challenges in terms of exploration and exploitation, as the agent must efficiently navigate a vastly expanded action space to optimise outcomes. \citep{bubeck2011x, kleinberg2008multi}

Our research considers the critical addition of personalisation to recent works in federated XABs, particularly the novel method proposed by \citet{li2022federated}. We realise this through a surrogate reward function that combines local and global knowledge, inspired by \citet{hanzely2020federated}. Specifically, we defined a surrogate reward function for each client as a linear combination of their local reward function and the average reward function across all clients. This formulation inherently emphasises the importance of integrating global information, which is particularly beneficial in scenarios where data heterogeneity is significant and global insights are crucial for informed decision-making. Our approach offers a tunable balance between global and individualised optimisation, enabling clients to benefit from collective learning while still tailoring the model to their specific conditions.

While a recent extension \citep{li2024personalized} also considers a personalised variant of federated XAB, our approaches differ substantively. The approach of \citet{li2024personalized} seeks to optimise a sum of local functions directly, rather than personalised mixtures of local and global effects (as we do), under an assumption that differences among individuals are bounded (via constraints on local reward functions). This model and assumptions leads to an alternative notion of regret, for which the theoretical guarantees are ostensibly sharper, but since the settings are different, the results are not directly comparable. Our framework can capture broader heterogeneity among clients, and allows improved flexibility since the level of personalisation can be determined by the practitioner rather than environment alone.

The flexibility of our framework offers significant potential benefits across various domains. In healthcare applications, it ensures that while general medical insights are derived from aggregated data, treatment recommendations remain personalised to each patient's unique medical history and condition. For smart home devices, our approach allows for the aggregation of usage patterns from various households while customising device behaviour to suit individual preferences. In e-commerce, the framework facilitates the tailoring of product recommendations and marketing strategies based on diverse shopping behaviors, enhancing customer engagement and satisfaction. Crucially, in each of these scenarios, our model allows the level of personalisation to be adjusted according to the specific requirements of the setting, striking an optimal balance between collective learning and individual customisation.


\subsection{Main Contributions}
 Our primary contributions are as follows:
\begin{enumerate}
    \item  We propose a novel personalised federated XAB model that accounts for user preferences and data heterogeneity. This model is strategically designed to optimise the trade-offs between communication efficiency and learning performance in a federated bandit setting.
    \item We  introduce \texttt{PF-XAB}, a novel algorithm tailored to the personalised federated XAB problem. Key features of \texttt{PF-XAB} include: a) edge clients communicate only their local reward estimates, preserving privacy of raw observations, b) the algorithm achieves sublinear regret while maintaining logarithmic communication cost. 
    \item We provide theoretical analysis demonstrating that \texttt{PF-XAB} achieves sublinear regret bounds while maintaining logarithmic communication cost.
\end{enumerate}

\section{Preliminaries}

In this section, we define the framework for our study of the personalised federated XAB problem. We will outline our model and specific objectives, introducing the key notation and assumptions that underpin our analysis. 
For an integer $n \in \N$, we let $[n]$ represent the set of integers $\{1, 2, \dots, n\}$. For a set $A$, $|A|$ denotes the size of set $A$. 

\subsection{Problem Formulation}\label{sec:problem_formulation}

\mypara{Clients and local models.} 
Let $\Xcal$ be the measurable space of arms. We model the problem, following \citet{li2022federated}, as a federation of $M \in \N$ clients, each with access to a distinct bounded \emph{local objective} ${\mu_m(x): \Xcal \rightarrow [0,1]},~ m \in [M]$. These objectives may be non-convex, non-differentiable, and even discontinuous. As we focus on bounded functions, without loss of generality, we assume these functions are bounded within the unit interval $[0, 1]$. 

Given a fixed number of rounds $T$, each client queries its local objective oracle once per round by selecting an arm $x_{m,t} \in \Xcal$ in round $t \in [T]$. This evaluation yields a noisy feedback $r_{m,t} = \mu_{m}(x_{m,t}) + \epsilon_{m,t}$, where $\epsilon_{m,t}$ is a zero-mean, bounded random noise, independent of previous observations or other clients' observations. Importantly, the local objectives can capture client-specific preferences, i.e., $\mu_m(x)$ and $ \mu_n(x)$  are not necessarily equal for distinct clients $m$ and $n$.

\mypara {The global model.} The global model is a $\Xcal$-armed bandit model that shares the search space with the local models but uses the average of local objectives over all clients as its objective. This average, known as \emph{global objective}, is defined as $$\mu(x) := \frac{1}{M} \sum_{m=1}^M \mu_m(x), ~~ x\in\Xcal.$$ 
It is important to note that while the global model represents the average of local models, the global rewards are not directly accessible to any individual client, as local observations remain private.

\mypara{The personalised model.} In conventional federated learning framework, clients collaborate to optimise the global objective $\mu$. However, this approach may not always align with individual client interests. To address this, we capture the personalisation aspect via a mixed objective, inspired by the well-established and popular technique of \citet{hanzely2020federated} in federated learning literature.

Similar to  \citep{shi2021federated-b, salgia2023collaborative}, we define a \emph{personalised objective} $\mu'_m$, which is a convex combination of global and local objectives:
\begin{equation}\label{eqn:personal_reward}
\mu'_{m}(x):=\alpha \mu_{m}(x) +(1-\alpha) \mu(x), ~~ x\in\Xcal,
\end{equation}
where $\alpha\in[0,1]$ is the personalisation parameter.

The components of the mixed objective bear similarity to the objectives in  \citep{li2022federated} and \citep{li2024personalized}, however handling the mixed objective brings additional challenges, as well as the flexibility to capture more diverse sets of clients. The personalisation parameter $\alpha$ allows for explicit control over the level of personalisation:
\begin{itemize}
    \item $\alpha=1$: Complete personalisation (equivalent to local objective)
    \item $\alpha=0$: Complete reliance on the global model (no personalisation)
    \item $\alpha\in(0,1)$: Balances global knowledge with local preferences
\end{itemize}
By varying $\alpha$, we address the trade-off between global and local performance. This model integrates personalisation while retaining the benefits of global generic knowledge, potentially mitigating overfitting risks common in fine-tuned models. 

It is important to note that optimising this personalised objective is more challenging than scenarios where clients collectively optimise a generic global objective ($\alpha=0$) or individually focus on local objectives ($\alpha=1$). The mixed nature of the objective introduces complexities in balancing the local and global information.

In this personalised model, each client aims to optimise its own personalised objective, which is unique to that client. However, this introduces a significant challenge: the estimation of a client's personalised objective $\mu'_m$ requires knowledge of other clients' local objectives $\{\mu_n\}_{n\neq m}$, which are not directly observable by client $m$.

Specifically, each client $m$ can only directly infer part of its own personalised objective - the component related to its local objective $\mu_m$. The global component $(1-\alpha)\mu(x)$, which incorporates information from all clients, remains partially unobservable to individual clients.This limitation necessitates collaboration and communication among clients under the coordination of a central server.

\subsection{Performance Measure}

In our proposed framework, we aim to devise a collaborative optimisation strategy across a federation of personalised models. We assess the performance of this strategy through the expected cumulative regret, which is essentially the sum of the individual clients' regrets, calculated with respect to their individual personalised objectives. Formally, we define the expected regret as
$$\E\bigg[\R(T)\bigg] = \E_\pi\bigg[\sum_{t=1}^T\sum_{m=1}^M\mu'^*_m-\sum_{t=1}^T\sum_{m=1}^M \mu'_m (x_{m,t})\bigg],$$
where $\mu'^*_{m}$ denotes the optimal value of $\mu'_m$ on $\Xcal$. The expectation is taken with respect to randomness in the sequence of actions, which arises from the stochasticity of local rewards $r_{m,t}$. 

It is important to note that in the context of regret, an inevitable increase over time is expected. Our focus, therefore,  lies in understanding and minimising the rate at which this increase occurs, aiming to minimise the average regret over time.

\subsection{Hierarchical Partitioning of Search Space}
We employ a recursive partitioning approach denoted by $\Pcal := \{\Pcal_{h,i}\}_{h,i}$, which  splits $\Xcal$ into a hierarchy of nodes at various depths. The partitioning follows the rule:
$$\Pcal_{0,1} = \Xcal,\ \Pcal_{h,i}:= \Pcal_{h+1, 2i-1} \bigcup \Pcal_{h+1, 2i},$$
where:\vspace{-8pt} \begin{itemize}
    \item Each node $\Pcal_{h,i}$ is defined by its depth $h$ and  index $i$, corresponding to a specific region within the discretisation of the search space.
    \item For each $h\geq 0,~ i > 0$ the set  $\{\Pcal_{h+1,2i-j}\}_{j=0}^1$ consists of two disjoint children nodes of $\Pcal_{h,i}$.
    \item  At each depth $h$, the union of all nodes is equivalent to the entire space $\Xcal$.
\end{itemize}  
The partition is fixed and shared among all clients and the central server prior to the beginning of the FL process. This ensures a consistent structure for exploration and information sharing across the federation. While we employ binary partitioning of the space in this work, our principles could readily be adapted to a  $k$-ary partitioning approach.

This hierarchical partitioning provides a structured approach to exploring the continuous action space $\Xcal$, facilitating efficient search and information sharing in our federated XAB framework.

\subsection{Assumptions}
To evaluate the performance of the proposed algorithms, we employ a set of assumptions commonly used in prior works on $\Xcal$-armed bandit \citep{bubeck2011x,lazaric2014online,  grill2015black, li2022federated}. 

\begin{assumption}\label{ass:diss_meas}\textbf{(Dissimilarity Function)} The space $\Xcal$ is equipped with a dissimilarity function $\ell: \Xcal^2 \rightarrow \R$ such that $\ell(x,y) \geq 0, \forall(x,y) \in \Xcal^2$ and $\ell(x,x) = 0$.
\end{assumption}

Given a dissimilarity $\ell$, we define:
\vspace{-8pt}
\begin{itemize}
    \item Diameter of any subset $A \subseteq \Xcal$:  ${\text{diam}(A):= \sup_{{x,y}\in A} \ell(x,y)}$
    \item Open ball with a radius $r$ centered at $c$: ${\Bcal(c,r):= \{x \in \Xcal: \ell(x,c) \leq r\}}$
\end{itemize} 
\begin{assumption}\label{ass:local_smooth}\textbf{(Local Smoothness)}
We assume that there exist $\nu_1,~ \nu_2>0$ and $0<\rho<1$ such that for all nodes $\Pcal_{h,i}, \Pcal_{h,j} \in \Pcal$ on depth $h$,    
\begin{itemize}
    \item $diam(\Pcal_{h,i}) \leq \nu_1 \rho^h$
    \item $\exists ~x_{h,i} \in \Pcal_{h,i}\ \text{ s.t. }\ \Bcal_{h,i} := \Bcal(x_{h,i}, \nu_2 \rho^h) \subset \Pcal_{h,i}$
    \item $\Bcal_{h,i} \bigcap \Bcal_{h,j} = \emptyset $ for all  $i \neq j$
    \item For any objective $\mu \in \{\mu_1, \dots, \mu_m\}$ we have for all $x,y \in \Xcal$, 
    $$\mu^* - \mu(y) \leq  \mu^* - \mu(x) +  \max\{\mu^* - \mu(x), \ell(x,y)\}.$$
\end{itemize}
This assumption pertains to  the “smoothness” of the objective functions  and ensures that as the depth increases, the search regions become increasingly fine, allowing for a more detailed exploration of promising areas within the hierarchical partitioning approach. 
\end{assumption}

\begin{remark}
    Similar to the existing works on the $\Xcal$-armed bandit problem, knowledge of dissimilarity function $\ell$ is not required for our algorithm. However, knowledge of the smoothness constants $\nu_1,~\rho$  is necessary.
\end{remark}

\begin{remark}
In this setup, we do not impose specific assumptions on clients' objective functions beyond the regular smoothness conditions typical in $\mathcal{X}$-armed bandit literature. This contrasts with \citep{li2024personalized}, the only prior research on this problem and most related and similar to ours, which imposes two additional constraining assumptions (Assumptions 3 and 4 in their work). These assumptions jointly require that near-optimal points with respect to the global objective are also near-optimal with respect to all local objectives. Our problem setup is more flexible, allowing for a wider range of real-world scenarios with varying degrees of data heterogeneity. 
    
\end{remark}

\section{Algorithm and Analysis}
\label{sec: algorithm}

In this section, we introduce a novel phase-based elimination algorithm to address the challenges of personalised federated XAB. We highlight its distinctive features compared to existing algorithms and provide a theoretical analysis of its performance.

A bespoke algorithm is sorely necessitated for the personalised federated XAB, as existing XAB algorithms will fail to meet the challenges of the setting. Unlike the focus on optimisation of the \emph{global objective} as explored in works by \citet{shi2021federated-a} and \citet{li2022federated}, our interest lies in simultaneously optimising the \emph{personalised objectives} of the clients. This task is evidently more challenging since it can be viewed as a problem of multi-objective optimisation with communication budget constraint. 

Consequently, developing an effective communication strategy that enables an unbiased estimation of the global objective becomes a critical element of algorithm design. This is particularly important in scenarios involving a large number of clients. Furthermore, the constraints on communication budget mean that local rewards are not instantly accessible. As a result, algorithms that depend on immediate reward feedback, such as \texttt{HOO}\citep{bubeck2011x} and \texttt{HCT} \citep{lazaric2014online}, are not appropriate for addressing this problem. 

In the context of federated bandit algorithms, the requirement for minimal communication overhead indeed renders algorithms that are built on phase-based elimination techniques appealing\citep{shi2021federated-a, shi2021federated-b, reda2022near}. Therefore, algorithms such as our own proposal, which eliminate sub-optimal nodes in phases, are more suitable for the federated XAB problem, as suggested in works like \citep{li2022federated, li2024personalized}.

\subsection{The \texttt{PF-XAB} Algorithm}
We now introduce our new phased-elimination algorithm, called Personalised Federated $\Xcal$-Armed Bandit (\texttt{PF-XAB}), designed to address the challenges outlined above. 

\begin{algorithm}
   \caption{\texttt{Fed-XAB: server}}
   \label{alg: server}
\begin{algorithmic}
   \State \textbf{Input:} $T$, $M$
   \While{not reaching the time horizon $T$}
      \State Receive $\Acal^m(p)$ from all clients
      \State Broadcast $\Acal(p) = \bigcup\limits_{m \in [M]} \Acal^m(p)$ and $f(p) = \frac{2 \log(T)}{M \nu_1^2 \rho^{2h}}$
      \State Receive local estimates $\{\bar{\mu}_{(h,i),m}\}_{m \in [M], (h,i) \in \Acal^m(p)}$
      \For{every $(h,i) \in \Acal(p)$}
         \State Update $\bar{\mu}_{(h,i)}(p) = \frac{1}{M} \sum_{m=1}^{M} \bar{\mu}_{(h,i),m}(p)$
      \EndFor
      \State Broadcast $\{\bar{\mu}_{(h,i)}(p)\}_{(h,i) \in \Acal(p)}$
      \State $p \gets p + 1$
   \EndWhile
\end{algorithmic}
\end{algorithm}

 The \texttt{PF-XAB} algorithm comprises two components: a client-side algorithm (Algorithm \ref{alg: client}) and a server-side algorithm (Algorithm \ref{alg: server}). The algorithm operates in dynamic phases, leveraging the hierarchical partition to gradually identify the optimum by systematically eliminating sub-optimal nodes within the search domain. 

The server acts as a coordinator with two primary functions. It broadcast active nodes $\Acal(p)$ in phase $p$, which represent the collective active nodes across all clients. Additionally, It aggregates empirical estimates of the local objectives $\bar{\mu}_{(h,i)}(p)$ for all active nodes $(h,i) \in \Acal(p)$ from clients, and subsequently broadcasts the empirical estimates of the global objective back to all clients.

On the client side, the algorithm involves a phased interaction with the environment, segmented into three distinct sub-phases: global exploration, local exploration, and exploitation. During  exploration sub-phases, clients take actions to construct empirical estimates of their local objective, which are then communicated to the server. While a client waits for the server to provide the empirical estimates of the global objective (as other clients may have more exploration to perform), it enters the exploitation sub-phase, where it simply exploits the best action based on the most recent evaluations.

\begin{algorithm}
   \caption{\texttt{PF-XAB: $m$-th client}}
   \label{alg: client}
\begin{algorithmic}
   \State \textbf{Input:} partitioning $\mathcal{P}$, time horizon $T$, client count $M$, personalization parameter $\alpha$, stopping depth $H$.
   \State \textbf{Initialize} $p = 1$, $\Acal^m(1) = \{(1,1), (1,2)\}$
   \While{not reaching the depth $H$}
      \State Receive $\Acal(p)$ and $f(p)$ from the server.
      \State \texttt{// Global exploration}
      \For{each $(h,i) \in \Acal(p)$ sequentially}
         \State Pull the node $\left\lceil(1-\alpha) f(p)\right\rceil$ times and receive rewards $\{r_{m, (h,i), t}\}$.
         \State Update $s_{(h,i),m} = s_{(h,i),m} + \sum_t r_{m,(h,i),t}$ and $T_{m,(h,i)} = T_{m,(h,i)} + \left\lceil(1-\alpha) f(p)\right\rceil$.
      \EndFor
      \State \texttt{// Local exploration}
      \For{each $(h,i) \in \Acal^m(p)$ sequentially}
         \State Pull the node $\lceil M\alpha f(p)\rceil$ times and receive rewards $\{r_{m,(h,i),t}\}$.
         \State Update $s_{(h,i),m} = s_{(h,i),m} + \sum_t r_{m,(h,i), t}$ and $T_{m,(h,i)} = T_{m,(h,i)} + \lceil M\alpha f(p)\rceil$.
      \EndFor
      \State Calculate $\bar{\mu}_{(h,i),m}(p) = s_{(h,i),m} / T_{m,(h,i)}$ for every $(h,i) \in \Acal(p)$.
      \State Send $\{\bar{\mu}_{(h,i),m}(p)\}_{(h,i) \in \Acal(p)}$ to the server.
      \State \texttt{// Exploitation}
      \State Set $(h_p, i_p) = \arg\max_{(h,i) \in \Acal^m(p)} \{\bar{\mu}_{(h,i),m}(p)\}$.
      \State Pull node $(h_p,i_p)$ until $\{\bar{\mu}_{(h,i)}(p)\}_{(h,i) \in \Acal(p)}$ are received from the server.
      \State \texttt{// Elimination}
      \ForAll{$(h,i) \in \Acal^m(p)$}
         \State $\bar{\mu}'_{(h,i),m}(p) = \alpha \bar{\mu}_{(h,i),m}(p) + (1-\alpha) \bar{\mu}_{(h,i)}(p)$
      \EndFor
      \State Compute the elimination set $$\Ecal^m(p) = \bigg\{(h,i) \in \Acal^m(p) \text{ such that } \bar{\mu}'_{(h,i), m}(p) + \nu_1 \rho^h \leq \bar{\mu}'_{(h_p, i_p), m}(p) - 2B_{p}\bigg\}$$
      \State Compute the new set of active nodes
      \[
      \Acal^m(p+1) = \bigg\{(h+1,2i-j) \text{ for } j \in \{0,1\}, \text{ and all } (h,i) \in (\Acal^m(p) \setminus \Ecal^m(p))\bigg\}
      \]
      \State Send $\Acal^m(p+1)$ to the server
      \State $p \gets p + 1$
   \EndWhile
   \State Select and play the optimal action for the remainder of the time
\end{algorithmic}
\end{algorithm}

The algorithm details are presented in Algorithms \ref{alg: server} and \ref{alg: client}. In these algorithms, we use the notation $(h, i)$ to index node $\mathcal{P}_{h,i}$ in the partitioning tree. $T_{m,(h,i)}$ denotes the number of pulled samples from $m$-th local objective in $(h,i)$. By convention,  we always sample the midpoint when playing a node. Due to the structure of the algorithm, $T_{m,(h,i)}$ is equal to $\lceil M\alpha f(p)\rceil + \lceil (1-\alpha)f(p)\rceil$ and $\lceil (1-\alpha)f(p)\rceil$ for local and global active nodes, respectively. Consequently, the total number of samples taken to compute the empirical mean of the personalised objective $\bar{\mu}'_{m, (h,i)}$ is lower bounded by $\lceil M\alpha f(p)\rceil + M\lceil (1-\alpha)f(p)\rceil \geq M f(p)$ and the corresponding confidence bound in elimination rule is $B_p=c\sqrt{\frac{\log(T)}{M f(p)}}$.

\subsection{Theoretical Analysis}
This section contains our main theoretical analysis and results. Here, we establish an upper bound on the cumulative regret of the \texttt{PF-XAB} algorithm, operating under the assumptions  previously outlined. Before this statement in Theorem \ref{thm:main}, several key lemmas are introduced.

We first determine the variance proxy of the empirical estimates of personalised objectives in Lemma \ref{lem: subg}, and use this to construct a high-probability “good event” in Lemma \ref{lem: good_event}, under which the empirical estimates of personalised objectives are sufficiently close to the actual personalised objectives. Lemmas \ref{lem:permanence_opt} and \ref{lem:remaining_nodes_subopt} demonstrate two key aspects: firstly, that the optimal nodes (those containing the optimal point) will never be eliminated, and secondly, that all points within the remaining region are, in fact, sub-optimal. Proofs of the lemmas are deferred to Appendix B.

To proceed we recall some facts concerning sub-Gaussian random variables.

\begin{definition}
\label{def:subg}
\textbf{\upshape (Sub-Gaussian Random Variables)} A zero-mean random variable $X\in \R$ is said to be sub-Gaussian with variance proxy $\sigma^2$ if $\E\left[X\right]=0$ and its moment generating function satisfies
\begin{equation*}
    \E\left[\exp(sX)\right] \leq \exp\bigg(\frac{\sigma^2s^2}{2}\bigg),\ \forall s \in \R.
\end{equation*}
\end{definition}

\begin{definition}
\label{def:sum_subg}
    \textbf{\upshape(Sum of sub-Gaussian Random Variables)} If $\{X_1, X_2, \cdots, X_n\}$ are independent $\{\sigma_1^2, \sigma_2^2, \cdots, \sigma_n^2\}$-sub-Gaussian random variables, then $Y = \sum\limits_{i=1}^n a_i X_i$, for any $a \in \R^n$,is a $\Tilde{\sigma}^2(Y)$-sub-Gaussian random variable, where
    $$\Tilde{\sigma}^2(Y) = \sum\limits_{i=1}^n a_i^2\sigma_i^2.$$
    
\end{definition}

As discussed, we first establish the sub-Gaussianity of our estimators, and from this build a high-probability event.

\begin{lemma}\label{lem: subg}
\textbf{\upshape (Sub-Gaussian Estimators)}
Let $\bar{\mu}'_{(h,i), m}(p)$ denote the empirical estimate of $\mu'_{(h,i), m}$ at the end of phase $p$. Then, $\bar{\mu}'_{(h,i), m}(p)$ is a sub-Gaussian random variable with  variance proxy $\sqrt{\frac{1}{4Mf(p)}}$.
\end{lemma}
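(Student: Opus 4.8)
The plan is to express $\bar{\mu}'_{(h,i),m}(p)$ as an affine function of the individual noise terms and then apply the composition rule for sums of independent sub-Gaussians from Definition~\ref{def:sum_subg}. First I would write out the estimator explicitly: by the elimination rule, $\bar{\mu}'_{(h,i),m}(p) = \alpha \bar{\mu}_{(h,i),m}(p) + (1-\alpha)\bar{\mu}_{(h,i)}(p)$, where $\bar{\mu}_{(h,i),m}(p) = s_{(h,i),m}/T_{m,(h,i)}$ is client $m$'s empirical mean over its $T_{m,(h,i)}$ pulls of node $(h,i)$, and $\bar{\mu}_{(h,i)}(p) = \frac{1}{M}\sum_{n=1}^M \bar{\mu}_{(h,i),n}(p)$ is the server's average. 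Substituting, the deterministic part is $\mu'_{(h,i),m}$ (the true personalised value at the node's midpoint), and the random part is a linear combination of the zero-mean noises $\epsilon_{n,(h,i),t}$ across all clients $n$ and all their pulls $t$. The weight attached to client $m$'s own noise is $(\alpha + \frac{1-\alpha}{M})\cdot\frac{1}{T_{m,(h,i)}}$ per pull, while the weight on another client $n\neq m$'s noise is $\frac{1-\alpha}{M}\cdot\frac{1}{T_{n,(h,i)}}$ per pull.

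Next I would compute the variance proxy via Definition~\ref{def:sum_subg}. Each $\epsilon$ is bounded and zero-mean, hence (by Hoeffding's lemma) sub-Gaussian with variance proxy at most $1/4$, using that rewards lie in $[0,1]$. Summing $a_n^2 \sigma_n^2$ over all noise terms: client $m$ contributes $T_{m,(h,i)}$ terms each weighted $(\alpha + \frac{1-\alpha}{M})^2 \frac{1}{T_{m,(h,i)}^2}\cdot\frac14$, totalling $(\alpha+\frac{1-\alpha}{M})^2 \frac{1}{4T_{m,(h,i)}}$; each other client $n$ contributes $(\frac{1-\alpha}{M})^2 \frac{1}{4T_{n,(h,i)}}$. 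Now I would invoke the sampling counts recorded after the algorithm description: $T_{m,(h,i)} \geq \lceil M\alpha f(p)\rceil + \lceil(1-\alpha)f(p)\rceil$ for the client's own (local) node and $T_{n,(h,i)} \geq \lceil(1-\alpha)f(p)\rceil$ for the others, and the stated aggregate lower bound $\sum_n T_{n,(h,i)} \geq M f(p)$ together with the per-client bounds. The bookkeeping should collapse to total variance proxy $\leq \frac{1}{4Mf(p)}$; the cleanest route is to bound $(\alpha + \frac{1-\alpha}{M})^2\frac{1}{T_{m,(h,i)}} + \sum_{n\neq m}(\frac{1-\alpha}{M})^2\frac{1}{T_{n,(h,i)}}$ using $T_{m,(h,i)} \ge M\alpha f(p) + (1-\alpha)f(p)$ and $T_{n,(h,i)} \ge (1-\alpha)f(p)$, and check the resulting expression is at most $\frac{1}{Mf(p)}$ for all $\alpha\in[0,1]$ — the extreme cases $\alpha=0$ and $\alpha=1$ are instructive sanity checks, and convexity/monotonicity in $\alpha$ handles the interior.

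The main obstacle I anticipate is the algebraic verification that the weighted sum of reciprocal sample counts is bounded by $\frac{1}{Mf(p)}$ uniformly in $\alpha$; this requires care because the weights and the counts both depend on $\alpha$, and one must use the ceiling-function lower bounds consistently (dropping ceilings only in the favourable direction). A secondary point to state carefully is the independence hypothesis required by Definition~\ref{def:sum_subg}: the noises $\epsilon_{n,(h,i),t}$ are independent across clients and rounds by the problem assumptions, but one should note that the sample counts $T_{n,(h,i)}$ are deterministic given the phase (they are fixed multiples of $f(p)$ determined by the algorithm, not data-dependent stopping times), so no filtration/optional-stopping subtlety arises. I would also remark that the statement's ``variance proxy $\sqrt{\frac{1}{4Mf(p)}}$'' should read as variance proxy $\frac{1}{4Mf(p)}$ (i.e.\ the sub-Gaussian parameter $\sigma^2$), matching the usage in Lemma~\ref{lem: good_event}. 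Finally, conclude by applying Definition~\ref{def:sum_subg} to assemble these pieces into the claimed bound.
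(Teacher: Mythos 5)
Your proposal is correct and follows essentially the same route as the paper: decompose $\bar{\mu}'_{(h,i),m}(p)$ into the weighted sum of client-level empirical means (weight $\alpha+\tfrac{1-\alpha}{M}$ on client $m$'s own samples, $\tfrac{1-\alpha}{M}$ on each other client's), use that each bounded noise term is $\tfrac14$-sub-Gaussian, apply the sum rule of Definition~\ref{def:sum_subg}, and plug in the sampling lower bounds $T_{m,(h,i)}\geq (M\alpha+1-\alpha)f(p)$ and $T_{n,(h,i)}\geq(1-\alpha)f(p)$, which yields exactly $\tfrac{M\alpha+1-\alpha}{4M^2f(p)}+\tfrac{(M-1)(1-\alpha)}{4M^2f(p)}=\tfrac{1}{4Mf(p)}$ for every $\alpha\in[0,1]$. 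Your side remarks (the statement's $\sqrt{\tfrac{1}{4Mf(p)}}$ should be read as variance proxy $\tfrac{1}{4Mf(p)}$, and the sample counts being deterministic per phase) are consistent with how the paper uses the lemma.
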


\begin{lemma}
    \label{lem: good_event}
    \textbf{\upshape (High Probability Event)} Define the \emph{good event} $G_T$ as 
    \begin{align*}
    G_T &:= \bigg\{ \forall p \in P_T, \forall m \in [M], \forall (h,i) \in \Acal^m(p) :\left|\bar{\mu}'_{(h,i), m} (p) - \mu'_{(h,i), m} \right| \leq c\sqrt{\frac{\log(T)}{M f(p)}} \bigg\},
        \end{align*}
    where the right hand side is the confidence bound $B_p$ for node $\Pcal_{h,i}$ and $c=\sqrt{2}$ is a constant. Then for any fixed $t$, we have $\P(G_T) \geq 1 - 2M^2 T^{-3}$
\end{lemma}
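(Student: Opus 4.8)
The plan is to prove Lemma \ref{lem: good_event} by a union bound over all phases, clients, and active nodes, combining the sub-Gaussian concentration of the estimators (Lemma \ref{lem: subg}) with a crude count of how many such triples can possibly arise. Fix the time horizon $T$. For a single triple $(p, m, (h,i))$ with $(h,i) \in \Acal^m(p)$, Lemma \ref{lem: subg} tells us that $\bar{\mu}'_{(h,i),m}(p) - \mu'_{(h,i),m}$ is mean-zero and sub-Gaussian with variance proxy $\sigma^2 = \tfrac{1}{4Mf(p)}$ (I would double-check whether the statement of Lemma \ref{lem: subg} intends the variance proxy to be $\tfrac{1}{4Mf(p)}$ or its square root, and use whichever makes the constant $c = \sqrt 2$ come out right). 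The standard sub-Gaussian tail bound then gives, for any $u > 0$,
\begin{equation*}
\P\Big( \big| \bar{\mu}'_{(h,i),m}(p) - \mu'_{(h,i),m} \big| \geq u \Big) \leq 2 \exp\Big( - \frac{u^2}{2\sigma^2} \Big) = 2 \exp\big( - 2 M f(p) u^2 \big).
\end{equation*}
Setting $u = B_p = c\sqrt{\log(T)/(Mf(p))}$ with $c = \sqrt 2$ makes the exponent equal to $-2 M f(p) \cdot 2 \log(T)/(Mf(p)) = -4\log(T)$, so each individual failure probability is at most $2 T^{-4}$.

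Next I would bound the number of triples. The depth is capped at the stopping depth $H$, and by construction each phase advances the active frontier by one level, so the number of phases $|P_T|$ is at most $H$ (and in any case at most some polynomial/logarithmic function of $T$); similarly the total number of distinct nodes ever activated across all phases is bounded, and there are exactly $M$ clients. The cleanest route is to note that at each phase $p$ and for each client $m$, $|\Acal^m(p)|$ is at most the number of nodes at that depth that survive, which is trivially bounded, and that summing over the at most $H$ phases and $M$ clients gives a total count of at most (something like) $M^2 \cdot \mathrm{poly}$; the paper evidently wants this count to be absorbed into the stated $2M^2 T^{-3}$, which from $2T^{-4}$ per event leaves a budget of $T$ triples — so the key quantitative claim to verify is that the total number of (phase, client, node) triples is at most $M \cdot T / (2M^2) \cdot \dots$, i.e., that $|P_T|$ and the per-phase node counts multiply to at most $O(T)$. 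This is where I would be most careful: I need the number of phases and active nodes to be genuinely $O(T)$ (or better, $O(\log T)$), which follows because a node must be sampled $\Omega(f(p)) = \Omega(\log T / (\text{something}))$ times per phase and the total sample budget is $MT$, forcing both the number of phases and the total node count to be at most polynomial in $T$ with the right leading behaviour.

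Finally, I would assemble the union bound:
\begin{equation*}
\P(G_T^c) \leq \sum_{p \in P_T} \sum_{m=1}^M \sum_{(h,i) \in \Acal^m(p)} \P\Big( \big| \bar{\mu}'_{(h,i),m}(p) - \mu'_{(h,i),m} \big| > B_p \Big) \leq (\text{number of triples}) \cdot 2 T^{-4} \leq 2 M^2 T^{-3},
\end{equation*}
the last inequality holding provided the number of triples is at most $M^2 T$, which the counting step establishes. Taking complements yields $\P(G_T) \geq 1 - 2M^2 T^{-3}$.

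The main obstacle is the bookkeeping in the second step: pinning down exactly which index set $P_T$ ranges over and giving a clean, correct upper bound on $\sum_{p}\sum_m |\Acal^m(p)|$ that is compatible with the claimed $2M^2T^{-3}$. The concentration inequality itself is routine once Lemma \ref{lem: subg} is in hand; the subtlety is purely combinatorial/accounting, and in particular making sure the crude bound on the number of surviving nodes per depth (which could a priori be exponential in $h$) is replaced by the correct sample-budget argument so that the total triple count is genuinely $O(M^2 T)$ rather than exponential.
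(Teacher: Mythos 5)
Your proposal follows essentially the same route as the paper: the sub-Gaussian tail bound from Lemma \ref{lem: subg} with $c=\sqrt{2}$ gives a per-triple failure probability of $2T^{-4}$ (and you are right that the variance proxy should be read as $\tfrac{1}{4Mf(p)}$), and the union bound is closed exactly as you anticipate, by noting that each active node is sampled at least once so the total number of (phase, client, node) triples is at most $M\sum_{p}|\Acal(p)|\leq M^2T$, yielding $\P(G_T^c)\leq 2M^2T^{-3}$. The counting step you flag as the main obstacle is resolved in the paper by precisely this sample-budget argument, so no further bookkeeping is needed.
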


On this high-probability event, we can establish guarantees both that the optimal node will not be eliminated, and that all other non-eliminated nodes are of a good quality.

 \begin{lemma}\label{lem:permanence_opt}
 \textbf{\upshape (Permanence of Optimal Nodes)}
     Under the assumption that the high probability event $G_T$ holds, it can be stated for any client $m$ that by the end of phase $p \in P_T$, the node $\Pcal_{h_{p}, i^{*}_{m,p}}$ will not be eliminated. This particular node contains the global optimum $x^*_m$ of the personalised objective $\mu'_m(x)$ at the depth $h_p$. More precisely, this implies that $(h_{p}, i^{*}_{m,p})$ will not be included in the set of eliminating nodes $\mathcal{E}^m(p)$.
 \end{lemma}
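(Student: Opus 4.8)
\textbf{Proof proposal for Lemma \ref{lem:permanence_opt}.}

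The plan is to argue by contradiction using the elimination rule together with the good event $G_T$ and the local smoothness of the personalised objective. Fix a client $m$ and a phase $p \in P_T$, and write $(h_p, i^*_{m,p})$ for the node at depth $h_p$ containing the optimiser $x^*_m$ of $\mu'_m$ (equivalently $\mu'_{m}{}^* = \mu'^*_m$). Suppose, for contradiction, that this node is placed in $\Ecal^m(p)$. By the definition of $\Ecal^m(p)$ this means
$$\bar{\mu}'_{(h_p, i^*_{m,p}), m}(p) + \nu_1 \rho^{h_p} \leq \bar{\mu}'_{(h_p, i_p), m}(p) - 2B_p,$$
where $(h_p, i_p)$ is the empirically best active node for client $m$ in phase $p$. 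The goal is to show this inequality cannot hold, which gives the claim.

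First I would bound the two empirical quantities by their true means using $G_T$: on $G_T$ we have $\bar{\mu}'_{(h_p, i^*_{m,p}), m}(p) \geq \mu'_{(h_p, i^*_{m,p}), m} - B_p$ and $\bar{\mu}'_{(h_p, i_p), m}(p) \leq \mu'_{(h_p, i_p), m} + B_p$. Here $\mu'_{(h,i),m}$ denotes the value of $\mu'_m$ at the sampled representative point of node $(h,i)$ (the midpoint, by the paper's convention). Substituting into the displayed inequality and cancelling, a contradiction would follow once we show
$$\mu'_{(h_p, i^*_{m,p}), m} + \nu_1 \rho^{h_p} > \mu'_{(h_p, i_p), m}.$$
Since $\mu'_{(h_p, i_p), m} \leq \mu'^*_m$ trivially (the true optimum dominates any point value), it suffices to show $\mu'_{(h_p, i^*_{m,p}), m} \geq \mu'^*_m - \nu_1 \rho^{h_p}$, i.e. that the representative point of the optimal node is $\nu_1\rho^{h_p}$-close in value to the optimum. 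This is exactly where the local smoothness assumption (Assumption \ref{ass:local_smooth}) enters: both $x^*_m$ and the representative point lie in $\Pcal_{h_p, i^*_{m,p}}$, whose diameter is at most $\nu_1 \rho^{h_p}$, so $\ell(x_{h_p,i^*_{m,p}}, x^*_m) \leq \nu_1\rho^{h_p}$; the smoothness inequality $\mu'^*_m - \mu'_m(y) \leq \mu'^*_m - \mu'_m(x) + \max\{\mu'^*_m - \mu'_m(x), \ell(x,y)\}$ applied with $x = x^*_m$ (so $\mu'^*_m - \mu'_m(x) = 0$) and $y$ the representative point yields $\mu'^*_m - \mu'_{(h_p, i^*_{m,p}), m} \leq \ell(x^*_m, y) \leq \nu_1 \rho^{h_p}$, as required. (One subtlety: Assumption \ref{ass:local_smooth} is stated for the base objectives $\mu_m$ and the global $\mu$; I would first note that the smoothness inequality is preserved under convex combinations, so $\mu'_m$ inherits it, possibly with the same constants since the $\max$-form inequality is convexity-friendly — this small step needs to be checked carefully.)

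The main obstacle I anticipate is bookkeeping the induction on phases correctly: the statement implicitly relies on the optimal node at depth $h_p$ actually being active in phase $p$, i.e. $(h_p, i^*_{m,p}) \in \Acal^m(p)$. This should follow by induction — if the optimal node at depth $h_{p-1}$ was not eliminated in phase $p-1$ (the inductive hypothesis, which is precisely this lemma at $p-1$), then its two children are added to $\Acal^m(p)$, and the child containing $x^*_m$ is the depth-$h_p$ optimal node. So the argument is really a simultaneous induction: assume the lemma holds up through phase $p-1$, deduce the optimal node is active in phase $p$, then run the contradiction argument above to conclude it survives phase $p$. I would make this induction explicit at the start. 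A secondary point to get right is the exact constant in the factor of $2B_p$ versus the $B_p + B_p$ slack produced by the two applications of $G_T$: the elimination threshold has a $2B_p$ gap precisely so that, combined with the $\nu_1\rho^{h_p}$ diameter term, the optimal node is provably safe; I would double-check that the inequalities chain through with no leftover slack of the wrong sign.
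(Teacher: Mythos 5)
Your argument is essentially the paper's own proof: the paper chains the same four inequalities directly (good-event bound on the optimal node, the $\nu_1\rho^{h_p}$ smoothness/diameter bound at the optimiser, the trivial dominance of $\mu'^*_m$ over the empirically-best node's true value, and the good-event bound on that node) to show the elimination threshold cannot be met, whereas you package it as a contradiction — the content is identical. Your two flagged caveats (that Assumption \ref{ass:local_smooth} must be taken to hold for the mixed objective $\mu'_m$, and the implicit induction guaranteeing $(h_p, i^*_{m,p}) \in \Acal^m(p)$) are points the paper glosses over, and making them explicit only strengthens the write-up.
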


\begin{lemma}\label{lem:remaining_nodes_subopt}
\textbf{\upshape (Quality of Un-eliminated Nodes)}
    Assuming the high probability event $G_T$ holds and $f(p) = \frac{2 \log(T)}{M\nu_1^2 \rho^{2h}}$ , all points in every un-eliminated nodes upon the conclusion of phase $p$, is at least $12\nu_1 \rho^{h_p}$-optimal.
\end{lemma}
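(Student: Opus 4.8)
### Proof Proposal for Lemma~\ref{lem:remaining_nodes_subopt}

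The plan is to chain together the elimination rule, the good event $G_T$, and the permanence of the optimal node (Lemma~\ref{lem:permanence_opt}) to control the personalised-objective value at an arbitrary point in a surviving node. Fix a client $m$, a phase $p \in P_T$, and a node $(h,i) = (h_p, i) \in \Acal^m(p) \setminus \Ecal^m(p)$ that survives phase $p$. Since $(h,i)$ was \emph{not} placed in the elimination set $\Ecal^m(p)$, the negation of the elimination condition gives
\begin{equation*}
\bar{\mu}'_{(h,i),m}(p) + \nu_1 \rho^{h_p} > \bar{\mu}'_{(h_p, i^*_{m,p}),m}(p) - 2B_p,
\end{equation*}
where I have used Lemma~\ref{lem:permanence_opt} to assert that the optimal node $(h_p, i^*_{m,p})$ is itself active and un-eliminated, so it is a legitimate competitor in the $\arg\max$ defining $(h_p,i_p)$; hence $\bar{\mu}'_{(h_p,i_p),m}(p) \geq \bar{\mu}'_{(h_p, i^*_{m,p}),m}(p)$. (If the argument instead uses the empirical best $(h_p,i_p)$ directly in the surviving-node inequality, the same bound follows since $\bar\mu'_{(h_p,i_p),m}(p)$ dominates the optimal node's estimate.)

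Next I would convert the empirical inequality into a statement about true personalised values using the good event $G_T$: on $G_T$, for every active node the empirical estimate is within $B_p = c\sqrt{\log(T)/(Mf(p))}$ of the true value $\mu'_{(h,i),m}$. Applying this to both $(h,i)$ and $(h_p, i^*_{m,p})$ and substituting $f(p) = 2\log(T)/(M\nu_1^2\rho^{2h_p})$ — which makes $B_p = c\sqrt{\log(T)\nu_1^2\rho^{2h_p}/(2\log(T))} = c\nu_1\rho^{h_p}/\sqrt{2} = \nu_1\rho^{h_p}$ since $c=\sqrt2$ — I get a bound of the form
\begin{equation*}
\mu'_{(h,i),m} \geq \mu'_{(h_p, i^*_{m,p}),m} - (\text{small constant}) \cdot \nu_1 \rho^{h_p}.
\end{equation*}
Here $\mu'_{(h,i),m}$ should be read as the personalised value at the representative point of node $(h,i)$ (the midpoint that the algorithm samples). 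To pass from the representative point to an \emph{arbitrary} point $x$ in the node, I would invoke Assumption~\ref{ass:local_smooth}: the diameter bound $\mathrm{diam}(\Pcal_{h_p,i}) \leq \nu_1\rho^{h_p}$ together with the final (weak-Lipschitz/smoothness) inequality of the assumption — noting that $\mu'_m$ inherits the smoothness of the $\mu_n$'s since it is a convex combination of them — controls $\mu'^*_m - \mu'_m(x)$ in terms of $\mu'^*_m - \mu'_m(x_{h_p,i})$ plus the node diameter. Finally, since the optimal node $(h_p, i^*_{m,p})$ contains $x^*_m$ and has diameter $\leq \nu_1\rho^{h_p}$, its representative value satisfies $\mu'^*_m - \mu'_{(h_p,i^*_{m,p}),m} \leq \nu_1\rho^{h_p}$ (again via the smoothness inequality). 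Collecting all the $O(\nu_1\rho^{h_p})$ terms — one from each of the two confidence widths $2B_p$, one from $\nu_1\rho^{h_p}$ in the elimination rule, one from the diameter of $(h,i)$, one from the diameter of the optimal node, plus the overhead from applying the weak-Lipschitz inequality — should total at most $12\nu_1\rho^{h_p}$, giving the claim.

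The main obstacle I anticipate is the bookkeeping in the last step: the smoothness condition in Assumption~\ref{ass:local_smooth} is not a clean Lipschitz bound but the one-sided inequality $\mu^* - \mu(y) \leq \mu^* - \mu(x) + \max\{\mu^*-\mu(x), \ell(x,y)\}$, so applying it to move between points within a node (and possibly iterating it) inflates the suboptimality gap by controlled but nontrivial factors, and one must be careful that $\ell(x, x_{h_p,i}) \leq \mathrm{diam}(\Pcal_{h_p,i}) \leq \nu_1\rho^{h_p}$ is exactly the quantity fed into the $\max$. A secondary subtlety is verifying that $\mu'_m$ genuinely satisfies the smoothness inequality with the \emph{same} constants $\nu_1,\rho$ — this needs the observation that a convex combination of functions each satisfying the inequality (with a common maximiser structure per node) still satisfies it, which may require a short auxiliary argument rather than being immediate. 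Constant-chasing aside, no step requires an idea beyond combining the elimination rule, $G_T$, and Assumption~\ref{ass:local_smooth}.
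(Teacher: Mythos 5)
Your proposal is correct and follows essentially the same route as the paper: negate the elimination rule for a surviving node, apply the good event to both that node and the empirical best $(h_p,i_p)$ (which dominates the optimal node's estimate because, by Lemma~\ref{lem:permanence_opt}, that node is still active), use the smoothness assumption to relate the optimal node's representative value to $\mu'^*_m$, and note $B_p=\nu_1\rho^{h_p}$, giving that the node's midpoint is $6\nu_1\rho^{h_p}$-optimal; the extension to every point of the node, which you carry out by applying Assumption~\ref{ass:local_smooth} with the diameter bound, is exactly what the paper does by citing Lemma~3 of \citet{bubeck2011x}, yielding $\max(2\cdot 6,\,6+1)\nu_1\rho^{h_p}=12\nu_1\rho^{h_p}$ and confirming your constant accounting. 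The only point you flag as a subtlety --- whether $\mu'_m$ inherits the weak-Lipschitz property --- is simply assumed without comment in the paper's proof, so your extra caution there goes beyond, rather than conflicts with, the published argument.
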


To complete the proof of Theorem \ref{thm:main}, we need one additional concept: that of near-optimality dimension.

\begin{definition}\label{def:near_opt_dim} \textbf{(Near-optimality dimension)}
Given a function $f:\Xcal \rightarrow \R$, the \emph{$\nu$-near-optimality dimension}, denoted by $d_{f}(\epsilon, \nu \epsilon),$ is the smallest $d>0$ such that there exists $C>0$ such that for any $\epsilon>0$, the covering number of $\epsilon$-optimal subset of $\Xcal$, denoted as $\Xcal_{f,\epsilon}$, with $\ell$-balls of radius $\nu \epsilon$ is less than $C \epsilon ^ {-d}$.
    
\end{definition}

Using the definition of near-optimality dimension, we denote $d_m = d_{\mu'_m}(12\nu_1 \rho^h, \nu_1\rho^h)$ for every $m \in [M]$. 

We now present the central result that establishes an upper bound on the expected cumulative regret of the \texttt{PF-XAB} algorithm.

\begin{theorem}\label{thm:main}
\textbf{\upshape (Regret Upper Bound)}
Suppose that $\mu_m(x)$ satisfies Assumptions \ref{ass:local_smooth}, and let $d_m$ denote the near-optimality dimension of the mixed objective as defined in Definition \ref{def:near_opt_dim}. The expected cumulative regret of the \texttt{PF-XAB} algorithm is bounded above by
$$\E[R(T)] = \widetilde{\Ocal}(T^{\frac{d'+2}{d'+3}}),$$
where $d' = \max\{d_1, \dots, d_M\}$. 
\end{theorem}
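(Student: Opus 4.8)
The plan is to follow the standard phased-elimination regret decomposition from the $\Xcal$-armed bandit literature (e.g.\ \citealp{li2022federated}), but carefully tracking the mixed-objective confidence widths $B_p$ and the communication structure. First I would work on the good event $G_T$ from Lemma~\ref{lem: good_event}, whose failure contributes only $O(M^2 T^{-3} \cdot MT) = o(1)$ to the expected regret, so it can be ignored. On $G_T$, Lemma~\ref{lem:permanence_opt} guarantees the optimal node is never eliminated, and Lemma~\ref{lem:remaining_nodes_subopt} guarantees that every arm pulled in phase $p$ (whether for global exploration, local exploration, or exploitation) lies in a node that is $O(\nu_1 \rho^{h_p})$-optimal with respect to $\mu'_m$ — hence each such pull incurs instantaneous regret at most $12\nu_1\rho^{h_p}$ (this uses the local smoothness Assumption~\ref{ass:local_smooth} applied to $\mu'_m$, noting the mixed objective inherits the smoothness since it is a convex combination).

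Next I would bound the number of active nodes and the number of pulls per phase. By the near-optimality dimension definition (Definition~\ref{def:near_opt_dim}) applied to $\mu'_m$ with $\epsilon = 12\nu_1\rho^{h_p}$ and covering radius $\nu_1\rho^{h_p}$, the number of surviving nodes for client $m$ at depth $h_p$ is at most $|\Acal^m(p)| \le C (\rho^{h_p})^{-d_m} = C (\rho^{h_p})^{-d'}$ up to constants, since these nodes are $12\nu_1\rho^{h_p}$-optimal and (by the separation property of $\Bcal_{h,i}$) contain disjoint balls of radius $\nu_2\rho^{h_p}$. Each node is pulled $\lceil M\alpha f(p)\rceil + \lceil(1-\alpha)f(p)\rceil = \widetilde{O}(\rho^{-2h_p})$ times, using $f(p) = \frac{2\log T}{M\nu_1^2\rho^{2h_p}}$; the global-exploration nodes $\Acal(p) = \bigcup_m \Acal^m(p)$ number at most $M \cdot C(\rho^{h_p})^{-d'}$, each pulled $\lceil(1-\alpha)f(p)\rceil$ times. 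Multiplying node count, pulls per node, and per-pull regret $12\nu_1\rho^{h_p}$, the total regret contribution from all $M$ clients in phase $p$ is $\widetilde{O}(M \cdot \rho^{-h_p d'} \cdot \rho^{-2h_p} \cdot \rho^{h_p}) = \widetilde{O}(M\, \rho^{-h_p(d'+1)})$. Summing the geometric-type series over phases $p = 1, \dots, H$ (the sum is dominated by its last term), the cumulative regret up to stopping depth $H$ is $\widetilde{O}(M \rho^{-H(d'+1)})$, while the number of rounds consumed is $\widetilde{O}(M \rho^{-H(d'+2)})$ — equating this to $T$ gives $\rho^{-H} \asymp (T/M)^{1/(d'+2)}$ roughly, which would yield $\widetilde{O}(T^{(d'+1)/(d'+2)})$. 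To get the stated exponent $\frac{d'+2}{d'+3}$, I would instead argue as in \citet{bubeck2011x,li2022federated}: rather than fixing $H$ a priori, bound the regret from any fixed depth $h$ onward by $\widetilde{O}(T\nu_1\rho^h)$ (every pull after reaching depth $h$ is $O(\rho^h)$-optimal) and the regret of getting to depth $h$ by the phase-sum $\widetilde{O}(M\rho^{-h(d'+2)})$, then optimize over $h$: balancing $T\rho^h$ against $\rho^{-h(d'+2)}$ gives $\rho^h \asymp T^{-1/(d'+3)}$ and total regret $\widetilde{O}(T^{(d'+2)/(d'+3)})$.

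The main obstacle I anticipate is the bookkeeping in the exploitation sub-phase: a client that finishes exploration early keeps pulling its current best node $(h_p, i_p)$ while waiting for stragglers, so the number of such ``idle'' pulls is not controlled by $f(p)$ but by how long the slowest client takes — in the worst case $\widetilde{O}(M \rho^{-h_p d'} \cdot \rho^{-2h_p})$ extra pulls. I need Lemma~\ref{lem:remaining_nodes_subopt} to ensure these idle pulls are still on a $12\nu_1\rho^{h_p}$-optimal node (true, since $(h_p,i_p)\in\Acal^m(p)$ and survives), so they contribute at the same per-pull rate and do not change the order of the bound; but verifying that the synchronization overhead stays within the same phase-regret budget $\widetilde{O}(M\rho^{-h_p(d'+2)})$ is the delicate step, and it is here that the logarithmic communication cost claim (number of phases $= O(\log T)$, hence $O(\log T)$ rounds of communication) must be threaded through consistently. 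A secondary technical point is confirming that $d_{\mu'_m} \le d'$ uniformly and that the constant $C$ in the covering bound can be taken uniform over clients and phases, which I would fold into the $\widetilde{O}$ notation.
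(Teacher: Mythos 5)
Your overall skeleton (good event, phased decomposition, counting surviving nodes via the near-optimality dimension, then balancing exploration cost against a trailing $T\rho^h$ term) is the same as the paper's, and your final arithmetic lands on the right exponent. However, there is a genuine gap in the middle of your argument: you assert that \emph{every} pull in phase $p$ — including the global-exploration pulls — is on a node that is $12\nu_1\rho^{h_p}$-optimal with respect to the pulling client's own personalised objective $\mu'_m$. That is not justified. During global exploration, client $m$ pulls all nodes in $\Acal(p)=\bigcup_n \Acal^n(p)$, i.e.\ nodes that survive elimination under \emph{other} clients' objectives $\mu'_n$; in this paper's assumption-light setting (which deliberately avoids the cross-client near-optimality assumptions of \citep{li2024personalized}), such nodes can be arbitrarily bad for $\mu'_m$, so Lemma \ref{lem:remaining_nodes_subopt} does not apply to them. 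This is exactly why your intermediate per-phase bound $\widetilde{\Ocal}(M\rho^{-h_p(d'+1)})$ and the rate $T^{(d'+1)/(d'+2)}$ come out ``too good''; the paper instead charges each global-exploration pull regret at most $1$ (rewards lie in $[0,1]$), which makes the exploration round count $\widetilde{\Ocal}(\rho^{-h(d'+2)})$ the dominant per-phase regret and is the source of the weaker exponent $\tfrac{d'+2}{d'+3}$.

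The same flaw re-enters your final step: the claim ``every pull after reaching depth $h$ is $O(\rho^h)$-optimal'' fails if exploration continues past depth $h$, because deeper phases still contain global-exploration pulls on other clients' nodes. The paper's resolution is that $H$ is \emph{not} an analysis parameter but an algorithmic input (Algorithm \ref{alg: client} takes a stopping depth $H$): pre-stopping regret is bounded by the total number of exploration rounds, whose geometric phase-sum is $\widetilde{\Ocal}(\rho^{-H(d'+2)}\log T)$ (plus a lower-order $\rho^{-H(d'+1)}\log T$ term from local exploration/exploitation pulls, which \emph{are} covered by Lemma \ref{lem:remaining_nodes_subopt}), and after stopping each client purely exploits a $12\nu_1\rho^{H}$-optimal node, contributing $12\nu_1\rho^{H}T$; choosing $\rho^{H}=O(T^{-1/(d'+3)})$ balances these and yields the theorem. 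Since your final balancing already uses the round-count bound $\rho^{-h(d'+2)}$ for the ``cost of reaching depth $h$'', your numbers coincide with the paper's, but the justification should be rewritten along these lines; your remarks on the exploitation (``idle'') pulls and on taking $d'=\max_m d_m$ with uniform constants are fine and match the paper.
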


\begin{proof}
   Let $\mathds{1}_G$ denote the indicator function associated with event $G$. Firstly we decompose the regret into two terms based on the presence of the good event
    \begin{align*}
        R(T) = \sum_{m=1}^M \sum_{t=1}^T \mu'^*_m - \mu'_{(h_t, i_t),m} &= \sum_{m=1}^M \sum_{t=1}^T \left(\mu'^*_m - \mu'_{(h_t, i_t),m}\right) \1_{G_T} \\&+ \sum_{m=1}^M \sum_{t=1}^T \left(\mu'^*_m - \mu'_{(h_t, i_t),m}\right) \1_{G_T^c}.
\end{align*}
Thus $\E\left[R(T)\right]  = \E\left[R^G(T)\right] + \E\left[R^{G^c}(T)\right].$

The second term of expected regret can be bounded as follows by using Lemma \ref{lem: good_event}:
    \begin{align*}
        \E\left[R^{G^c}(T)\right] &= \E \left[\sum_{m=1}^M \sum_{t=1}^T \left(\mu'^*_m - \mu'_{(h_t,i_t), m}\right) \1_{G_T^c} \right] \leq \sum_{m=1}^M \sum_{t=1}^T \P(G_T^c) \leq 2 M^3 T^{-2}.     
    \end{align*}
    To bound $\mathbb{E}\left[R^G(T)\right]$, we first bound the regret that we incur under the good event within each phase. This regret could be decomposed into three parts, incurred by three sub-phases that we have in the proposed algorithm.
    $$R^G(T) =  \sum_{p \in P_T} R^p_{\text{G-expr}} + R^p_{\text{L-expr}} + R^p_{\text{expt}}$$
    Here, $R^p_{\text{G-expr}}$, $ R^p_{\text{L-expr}}$, and $ R^p_{\text{expt}}$ denote the regret within phase $p$ that incurs to the algorithm in the global exploration, local exploration and exploitation sub-phases, respectively.

    During the global exploration sub-phase, each client has to pull non-optimal arms from the active nodes of other clients included in $\Acal(p)$. Therefore, the regret term associated to the global exploration $R^p_{\text{G-expr}}$, can be upper bounded by the length of this sub-phase
        \begin{align*}
            \E^G[R^p_{\text{G-expr}}] &\leq \left|\Acal(p)\right| \lceil (1-\alpha) f(p) \rceil \leq  M \cdot \max_{m \in [M]} |\Acal^m(p)| \lceil (1-\alpha) f(p) \rceil .
        \end{align*}
    The nodes in $\Acal^m(p)$ are direct descendants of the un-eliminated nodes in $\Acal^m(p-1)\setminus \Ecal^m(p-1)$, and the size of $\Acal^m(p-1)\setminus \Ecal^m(p-1)$ is bounded by $C \rho^{-d_mh_{p-1}}$ by the definition of near-optimality dimension (Definition \ref{def:near_opt_dim}) and Lemma \ref{lem:remaining_nodes_subopt}. Therefore, it means $$|\Acal^m(p)| = 2 |\Acal^m(p-1)\setminus \Ecal^m(p-1)| \leq 2 C \rho^{-d_mh_{p-1}}$$ and results the following bound
    \begin{align*}
       \E^G[R^p_{\text{G-expr}}] &\leq |\Acal(p)| \lceil (1-\alpha) f(p) \rceil \leq 2M C \rho^{-h_{p-1}d'} \times 2(1-\alpha) \frac{2\log(T)}{M \nu_1^2 \rho^{2h_p}}\\
       &= \frac{8 C(1-\alpha) \log(T)}{\nu_1^2 \rho^2} \rho^{-h_{p-1}(d'+2)}.
    \end{align*}

    Next, we bound the regret  incurred in the rest of each phase, given by $R^p_{\text{L-expr}} + R^p_{\text{expt}}$. From Lemma \ref{lem:remaining_nodes_subopt}, we know that all the actions clients take in these two sub-phases are sub-optimal, thus we can establish the following upper bound:
    \begin{align*}
        \E^G[R^p_{\text{L-expr}} + R^p_{\text{expt}}] &\leq 12 \nu_1 \rho^{h_{p-1}} \times \lceil M \alpha f(p) \rceil \times \max_{m \in [M]} |\Acal^m(p)| \\
        &\leq 12 \nu_1 \rho^{h_{p-1}} \times 2 M \alpha \frac{2\log(T)}{M \nu_1^2 \rho^{2h_p}} \times 2 C \rho^{-d'h_{p-1}}\leq \frac{96 C \alpha \log(T)}{\nu_1 \rho^2} \rho^{-h_{p-1}(d'+1)}.
    \end{align*}
Having established the key components of proof, we now proceed to the final step of the proof. It involves determining the depth $H$ at which we will halt exploration and start to do pure exploitation. The expected regret under the presence of the good event is bounded as follows
\begin{align*}
    \E^G[R(T)] & \leq \frac{8 C(1-\alpha) \log(T)}{\nu_1^2 \rho^2}  \sum_{p=1}^P \left(\rho^{-(d'+2)}\right)^{h_{p-1}} \\ &+~\frac{96 C\alpha \log(T)}{\nu_1 \rho^2} \sum_{p=1}^P \left(\rho^{-(d'+1)}\right)^{h_{p-1}} \\&+~ 12 \nu_1 \rho ^{h_P} (T-T_P)\\
    &\leq \frac{16 Mc(1-\alpha) \log(T)}{\nu_1^2 \rho^2 (\rho^{-(d'+2)}-1)}  \left(\rho^{-(d'+2)}\right)^{H}  \\&+~\frac{96 M^2 \alpha c \log(T)}{\nu_1 \rho^2 (\rho^{-(d'+1)}-1)} \left(\rho^{-(d'+1)}\right)^{H} \\&+~ 12 \nu_1 \rho ^{H} T,
\end{align*}
where $H$ represents the depth of the partitioning tree by the end of phase $P$. Thus, the proof concludes by putting everything together as
\begin{align*}
    \E[R(T)] & \leq  2 M^3 T^{-2} 
    + \frac{16 Mc(1-\alpha) \log(T)}{\nu_1^2 \rho^2 (\rho^{-(d'+2)}-1)}  \left(\rho^{-(d'+2)}\right)^{H}\\ 
    &+ \frac{96 M^2 \alpha c \log(T)}{\nu_1 \rho^2 (\rho^{-(d'+1)}-1)} \left(\rho^{-(d'+1)}\right)^{H} 
    + 12 \nu_1 \rho ^{H} T,
\end{align*}

and choosing $H$ such that $\rho^H = O(T^{-1/d'+3})$. 

\end{proof}

While a bespoke lower bound for the personalised federated XAB problem is not yet available to assess the tightness of this bound, we can compare our $\widetilde{\Ocal}(T^{\frac{d'+2}{d'+3}})$  to results for similar problems. Under the more restrictive assumptions of the framework of \citep{li2024personalized}, an $\widetilde{\Ocal}(T^{\frac{d'+1}{d'+2}})$ upper bound on regret is achieved by both \texttt{PF-PNE} and our algorithm.

The setting of Theorem \ref{thm:main} is more general, and therefore the slightly increased order of the regret bound is perhaps unsurprising. In any case, the existence of a method which is competitive under stronger assumptions and robust enough to attain sublinear regret under our weaker assumptions is encouraging with a view to real-world challenges where strong assumptions may be challenging to verify.

\section{Experiments}
In this section we evaluate the empirical performance of \texttt{PF-XAB} through a series of experiments employing synthetic objective functions and real-world dataset. To provide a comprehensive assessment, we benchmark \texttt{PF-XAB} against relevant existing algorithms.  Specifically, we compare to the federated $\Xcal$-armed bandit algorithm \texttt{Fed-PNE} \citep{li2022federated} and the personalised federated $\Xcal$-armed bandit algorithm \texttt{PF-PNE} \citep{li2024personalized}.  Each of these methods adopts a different notion of regret, reflecting their distinct models, which makes a truly fair comparison challenging. However, we choose to compare to closest counterparts in the literature, not to suggest their perfect compatibility with our problem, but rather to illustrate the necessity of a bespoke solution. This comparison highlights a performance gap when the algorithms of \citep{li2022federated} and \citep{li2024personalized} are applied to our specific problem domain.Throughout this section, all experiments has been conducted on a federation with $M=10$ clients.

\begin{figure*}[ht]
    \centering
    \subfigure{
    \includegraphics[width=0.3\textwidth]{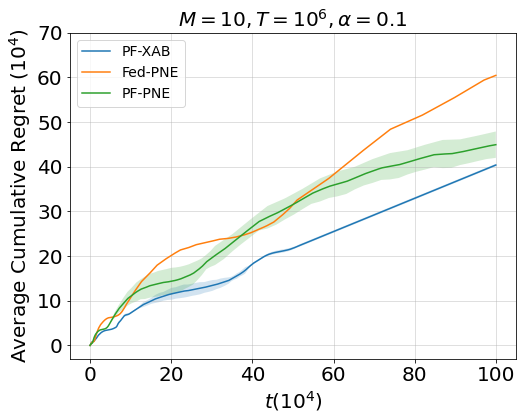}
    \label{fig:reg_cmp_01}
    }
    \subfigure{
    \includegraphics[width=0.3\textwidth]{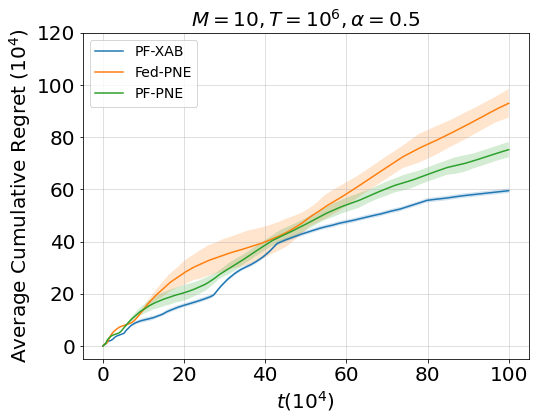}
    \label{fig:reg_cmp_05}
    }
    \subfigure{
    \includegraphics[width=0.3\textwidth]{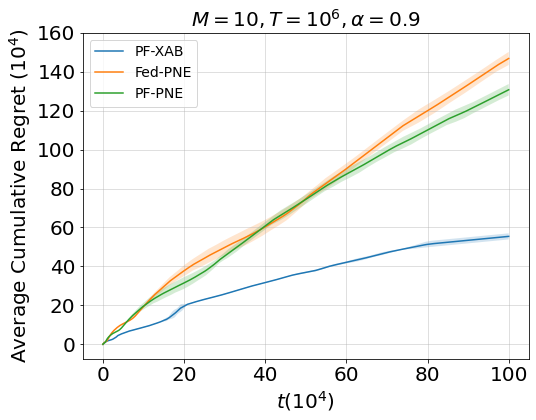}
    \label{fig:reg_cmp_09}
    }
\caption{Comparison of Cumulative Regret for \texttt{PF-XAB}, \texttt{Fed-PNE}, and \texttt{PF-PNE} Algorithms on the Garland Dataset Across Varying Levels of Personalisation $\alpha$, with $M=10$ machines over $T=2\times10^6$ Time Steps. Each graph demonstrates the impact of personalisation on the algorithms' performance, with $\alpha$ representing the degree of personalisation from low ($\alpha=0.1$) to high ($\alpha=0.9$).} 
\label{fig:g_reg_cmp}
\end{figure*}

We first conduct a range of experiments on two synthetic objective functions: the Garland and DoubleSine functions with varying levels of personalisation in the regret definition and in \texttt{PF-XAB}. These two synthetic functions are frequently used in the experiments of $\Xcal$-armed bandit algorithms due to their large number of local optima and extreme non-smoothness \citep{lazaric2014online, grill2015black, shang2019general, bartlett2019simple}. The experimental design takes the canonical functions as a baseline and modulates these such that each client's local optimum is distinct, whereas the global optimum is strategically positioned to be sub-optimal with respective to the local objectives. The average cumulative regrets of different algorithms are provided in Figure \ref{fig:g_reg_cmp} and \ref{fig:ds_reg_cmp} (see Appendix B) for the Garland and DoubleSine functions, respectively. The curves in these figures represent the averages over 10 independent runs of each algorithm, with the shaded regions indicating 1-standard deviation error bars.

In addition to our initial experiments with synthetic objective functions, we extended our experiments to a real-world-inspired scenario using the landmine dataset \citep{liu2007semi}. This dataset comprises data from multiple landmine fields, each assigned to a client, with features extracted from radar images to determine the presence of landmines at each location. Each client aims to optimise their performance on a  classification task, using a Support Vector Machine (SVM) classifier equipped with RBF kernels. Our experiments explore $d=2$ kernel parameters, $\gamma$ ranging in $[0.01, 10]$, and the regularisation parameter $C$, selected from the range $[10^{-4}, 10^4]$, forms the domain space of our experiments. The local objective for each client is defined as the AUC-ROC score of the classifier on their assigned landmine dataset. 
\begin{figure}[ht]
    \centering
    \includegraphics[width=0.4\linewidth]{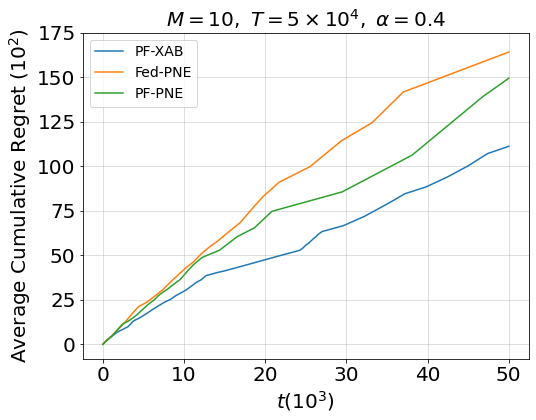}
    \caption{Cumulative regret comparison of \texttt{PF-XAB}, \texttt{Fed-PNE}, and \texttt{PF-PNE} on the Landmine dataset. }
    \label{fig:landmine_svm}
\end{figure}
The performance of different algorithms in this real-world setting is illustrated in Figure \ref{fig:landmine_svm}. This figure presents the average cumulative regret of each algorithm with $\alpha=0.4$, showcasing our algorithm's effectiveness by its ability to achieve the smallest regret. This outcome not only validates our approach in synthetic settings but also demonstrates its practical applicability and superiority in handling complex, real-world tasks, such as landmine detection.

In the third and final experiment, our intention was to assess \texttt{PF-XAB}'s performance in achieving a balance between personalisation and generalisation across the federated landscape. Figure \ref{fig:alpha_avg_reward} indicates that \texttt{PF-XAB} successfully converges to optimal choices across different values of $\alpha$, effectively demonstrating its proficiency in balancing the trade-off between personalisation and generalisation. This figure outlines the averaged  per-step reward attained by the \texttt{PF-XAB} under varying values of $\alpha\in[0,1]$. It benchmarks these results against the theoretical maximums for both global and local mean rewards, referred to as “best global” and “best local”, respectively. 


The analysis encompasses three definition of rewards: personalised, global, and local, each labeled accordingly and derived by clients' actions. At $\alpha=0$,  both personalised and global rewards align with the optimal global reward, while local rewards are significantly sub-optimal. As $\alpha$ is progressively increased, the personalised and local rewards trend up, indicating a shift in focus towards optimizing local rewards, and simultaneously the global reward trends down. At $\alpha=1$, both personalised and local rewards almost approximate the optimal local reward, while the global rewards are poor. This trend highlights the role of $\alpha$ in mediating a gradual balance between local and global reward optimization, suggesting its importance in the strategic adjustment of reward focus.

\begin{figure}[ht]
    \centering
    \includegraphics[width=0.4\linewidth]{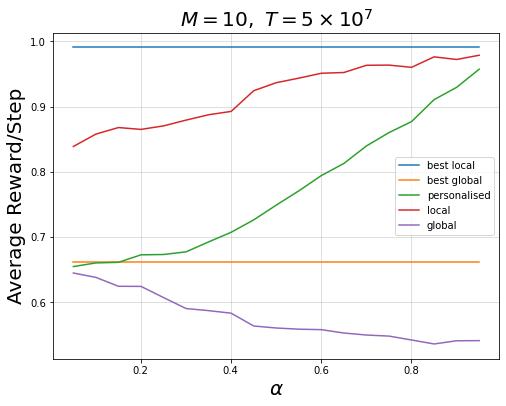}
    \caption{\texttt{PF-XAB} averaged per-step reward over varying $\alpha$. Personalised (green) adepts to $\alpha$, while local (red) and global (purple) remain fixed. Blue and yellow lines show optimal local and global values.}
    \label{fig:alpha_avg_reward}
\end{figure}

\section{Conclusion}
 In this article we have introduced a new model for personalised, federated, bandit learning on continuous action spaces, and proposed an effective solution method utilising hierarchical partitioning, batched decision-making and optimism in the face of uncertainty. Our approach achieves a sublinear regret (evidenced both theoretically and empirically) and requires minimal communication - ensuring a good level of privacy in the federated regime.

Our method shows a near-deterministic behaviour in certain problems: we notice little variability in its regret. This is likely a result of its phased structure - the main decisions (eliminations) of the policy are made during the small number of communication rounds. If the outcomes of these decisions are identical same across replications, the expected regret will also coincide. Future work may do well do explore whether there is scope to speed up exploration through the use of a randomised policy, e.g. a variant Thompson Sampling for continuous spaces \citep{kandasamy2018parallelised, grant2020thompson}, or through the use of ideas from Bayesian Optimisation \citep{frazier2018tutorial} which have also proven successful in $\mathcal{X}$-armed-type problems, particularly with smooth functions.

\bibliographystyle{abbrvnat}
\bibliography{ref}

\newpage
\appendix
\onecolumn
\section{Supplementary Proofs}

\subsection{Proof of Lemma \ref{lem: subg}}
\begin{proof}
Let $N_{(h,i)}(p)$ represent the total count of samples collected from the node $\mathcal{P}_{(h,i)}$ during phase $p$. This total can be broken down into two components: ${N^g_{(h,i)}(p)= \lceil(1-\alpha) f(p)\rceil}$, the number of samples acquired during the global exploration sub-phase, and  ${N^l_{(h,i)}(p)= \lceil M \alpha f(p)\rceil}$, the number of samples gathered during the local exploration sub-phase. 

The empirical estimate $\bar{\mu}'_{(h,i), m}(p)$ can be decomposed into two parts, namely local estimates and a global estimate, such that the proxy variance satisfies
\begin{align*}
&~\Tilde{\sigma}^2\left(\bar{\mu}'_{(h,i), m}(p)\right) \\
&= \left(\alpha + \frac{1-\alpha}{M}\right)^2 \frac{1}{4 N_{(h,i)}(p)} + \left(\frac{1-\alpha}{M}\right)^2 \sum_{n\neq m} \frac{1}{4N^g_{(h,i)}(p)}\\
&\leq \left( \frac{M \alpha + 1-\alpha}{M}\right)^2 \frac{1}{4 (M\alpha + 1 - \alpha) f(p)} + \left(\frac{1-\alpha}{M}\right)^2 \sum_{n\neq m} \frac{1}{4 (1-\alpha) f(p)}\\
&=\frac{M\alpha +1 - \alpha }{4 M^2 f(p)}+ \frac{(1-\alpha)(M-1)}{4 M^2 f(p)} = \frac{1}{4Mf(p)},    
\end{align*}

where the inequality follows by the required sampling length specified in algorithm.

\end{proof}

\subsection{Proof of Lemma \ref{lem: good_event}}

\begin{proof}
    in every phase $p \in P_T$ and for each client $m \in [M]$, the Hoeffding bound can be applied to  the empirical estimate of personalized objective corresponding to each node $(h,i) \in \Acal^m(p)$
    \begin{align*}
        \P\bigg(\bigg| &\bar{\mu}'_{(h,i),m}(p) - \mu'_{(h,i), m}\bigg|  \geq \epsilon\bigg) \leq 2 \exp\bigg(-\frac{\epsilon^2}{2 \Tilde{\sigma}^2\bigg(\bar{\mu}'_{(h,i), m}(p)\bigg)} \bigg) = 2 \exp(-2 \epsilon^2 M f(p)),
    \end{align*}
    where it holds true because  $\bar{\mu}'_{(h,i),m}(p)$ is a $\frac{1}{4Mf(p)}$-sub-Gaussian random variable, as in Lemma \ref{lem: subg}.

    Consequently, by utilizing a union bound, the probability of event $G_T^c$, the complement of event $G_T$,  can be upper bounded by
    \begin{equation}
    \begin{aligned}
        & \sum_{p \in P_T} \sum_{m=1}^M \sum_{(h,i) \in \Acal^m(p)} \P\left(\left| \bar{\mu}'_{(h,i),m}(p) - \mu'_{(h,i), m}\right|  > B_p \right)\\
        &\leq 2 \sum_{p=1}^T \sum_{m=1}^M \sum_{(h,i) \in \Acal^m(p)} \exp\left(-2 c^2 \log(T)\right) \\
        &= 2 T^{-2c^2} \sum_{p=1}^T \sum_{m=1}^M |\Acal^m(p)|\\
        &\leq 2 T^{-2c^2} M \sum_{p=1}^T |\Acal(p)| \leq 2 M^2T \times T^{-2c^2} \leq 2M^2T^{-3},
    \end{aligned}
    \end{equation}
     where the second inequality is derived from the relation $\Acal^m(p) \subset \Acal(p)$. The third inequality is based on the fact that the cumulative count of active nodes across all phases is less than or equal to the total number of samples collected throughout the entire process,  considering that each active node is sampled at least once.
 \end{proof}

 \subsection{Proof of Lemma \ref{lem:permanence_opt}}

 \begin{proof}
Under the assumption of event $G_T$, the following inequality holds for every node ${\mathcal{P}_{h, i} \in \Acal^m(p)}$:
\begin{equation}
    \begin{aligned}
        |\bar{\mu}'_{(h,i), m} (p) - \mu'_{(h,i), m}| \leq B_p.
    \end{aligned}
\end{equation}
Consequently, we have the following inequalities for the node with the node $\Pcal_{h_p, i_{m,p}^*}$, at the completion of phase $p$:
\begin{equation}
    \begin{aligned}
        \bar{\mu}'_{(h_p, i_{m,p}^*), m} (p) + B_p + \nu_1 \rho^{h_p} &\geq \mu'_{(h_p, i_{m,p}^*), m} + \nu_1 \rho^{h_p}\\
        & \geq \mu'_m (x_m^*) \geq \mu'_{(h_p, i_p), m}\\
        & \geq\bar{\mu}'_{(h_p,i_p), m} (p) - B_p,
    \end{aligned}
\end{equation}
where the second  inequality follows from  the local smoothness property of the objective function in Assumption \ref{ass:local_smooth} and $(h_p,i_p)$ is the index of node with the highest empirical estimate of personalised objective for client $m$. Thus, it can be  concluded that $(h_p, i^{*}_{m,p})$ will not be designated for elimination for client $m$ upon the conclusion of phase $p$, or in  more formal terms, $\mathcal{P}_{h_p, i^{*}_{m,p}} \notin \mathcal{E}^m(p)$.

 \end{proof}

 \subsection{Proof of Lemma \ref{lem:remaining_nodes_subopt}}

 \begin{proof}
This proof consists of two parts. In the first part, we demonstrate that the mid-point of every un-eliminated node is at least $6\nu_1\rho^{h_p}$-optimal.

Under event $G_T$, for every $(h,i) \in \Acal^m(p) \setminus \Ecal^m(p)$ we have the following sequence of inequalities
\begin{equation}
    \begin{aligned}
        \mu'_{(h,i),m} + \nu_1 \rho^{h_p} + 2 B_p &\geq \bar{\mu}'_{(h,i), m}(p) + \nu_1 \rho^h + B_p \\
        &\geq \bar{\mu}'_{(h_p,i_p),m}(p) - B_p\\
        &\geq \bar{\mu}'_{(h^*, i^*),m}(p) - B_p \\
        &\geq \mu'_{(h^*,i^*),m} - 2B_p\\
        &\geq \mu'_m(x_m^*) - \nu_1 \rho^h - 2B_p,
    \end{aligned}
\end{equation}
where $(h^*,i^*)$ represent the node that contains the global optimum of the personalised objective. The first and fourth inequalities directly results from event $G_T$, and the second inequality follows the fact that node $(h,i)$ has not been designated for elimination at the end of phase $p$, and the third inequality follows from the definition of $(h_p, i_p)$ in Algorithm \ref{alg: client}. 

Selecting an appropriate sampling length would lead to the dominance of optimization error over statistical error, in more precise terms, this implies that $B_p \leq \nu_1 \rho^{h}$.  Therefore, by considering $f(p) = \frac{c^2 \log(T)}{M\nu_1^2 \rho^{2h}}$, the latter inequality remains true, resulting in the following:
\begin{equation}
    \mu'^*_m - \mu'_{(h,i), m} \leq 2 \nu_1 \rho^h + 4 B_p \leq 6 \nu_1 \rho^h.
\end{equation}

In the second part, we use an existing result from \citep{bubeck2011x}.
Let $\Xcal_{f,\epsilon}:=\{x \in \Xcal: f^* - f(x) \leq \epsilon\}$ denote the subset of $\Xcal$ where the function $f$ is within an $\epsilon$ range of its optimal value $f^*$, representing the  $\epsilon$-optimal region for $f$. According to Lemma 3 from \citet{bubeck2011x}, if a node is $c\nu_1 \rho^h$-optimal, then every point in the corresponding region of that node is $\max(2c, c+1)\nu_1 \rho^h$-optimal.

Therefore, we can conclude that every point in the set of un-eliminated nodes at the end of phase $p$ is $12\nu_1\rho^h$-optimal, by the previous inequality. This completes the proof of the lemma.
\end{proof}

\section{Supplementary Experiments}

\begin{figure*}
    \centering
    \subfigure{
    \includegraphics[width=0.3\textwidth]{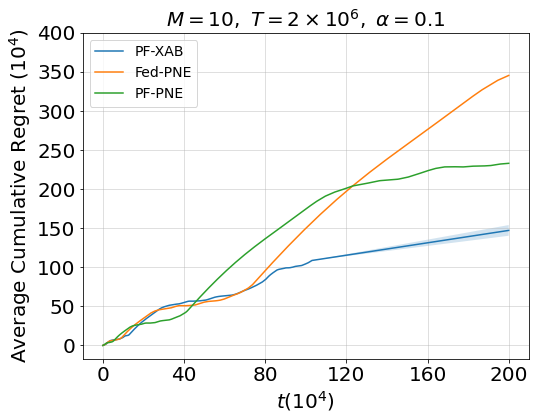}
    \label{fig:ds_reg_cmp_01}
    }
    \subfigure{
    \includegraphics[width=0.3\textwidth]{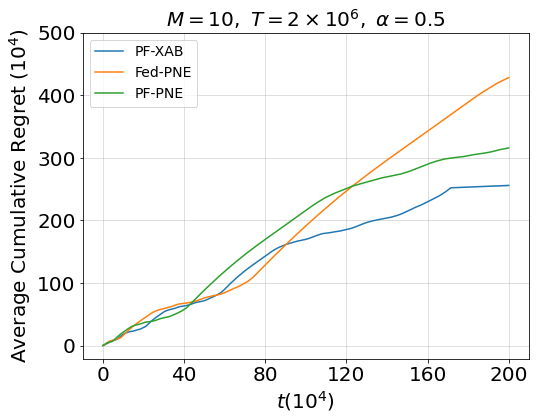}
    \label{fig:ds_reg_cmp_05}
    }
    \subfigure{
    \includegraphics[width=0.3\textwidth]{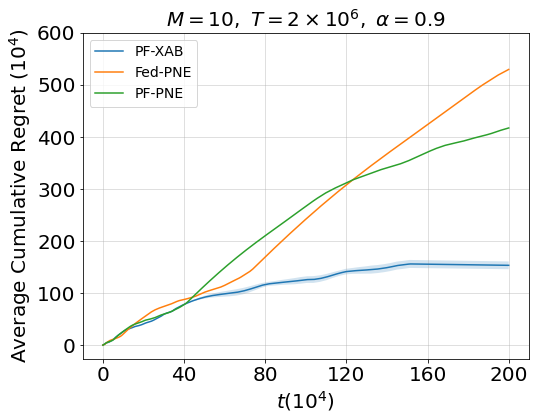}
    \label{fig:ds_reg_cmp_09}
    }
\caption{Cumulative regret of \texttt{PF-XAB}, \texttt{Fed-PNE}, and \texttt{PF-PNE} on DoubleSine dataset. Varying $\alpha$ (0.1 to 0.9) shows impact of personalisation. $M=10$, $T=2\times10^6$.}
\label{fig:ds_reg_cmp}
\end{figure*}

\end{document}